\newtheorem{theorem}{Theorem}[section]
\newtheorem{lemma}[theorem]{Lemma}
\title{\LARGE \bf
SARA-RT: Scaling up Robotics Transformers with \\ Self-Adaptive Robust Attention
}
\author{
Isabel Leal$^{*}$, Krzysztof Choromanski$^{*}$, Deepali Jain$^{*}$, Avinava Dubey$^{*}$, Jake Varley$^{*}$, \\  Michael Ryoo, Yao Lu, Frederick Liu, Vikas Sindhwani,  Quan Vuong, Tamas Sarlos, Ken Oslund, \\ Karol Hausman, Kanishka Rao \\
\small{\textrm{isabelleal, kchoro, jaindeepali, avinavadubey, jakevarley, mryoo, yaolug, frederickliu,}}\\
\small{\textrm{sindhwani, quanhovuong, stamas, kenoslund, karolhausman, kanishkarao; @google.com}} \\
% \small{\textrm{isabelleal@}, \textrm{kchoro@}, \textrm{jaindeepali@}, \textrm{avinavadubey@},  \textrm{jakevarley@}, \textrm{mryoo@}, \textrm{yaolug@}, \textrm{frederickliu@}, \textrm{sindhwani@}, \textrm{quanhovuong@}, \\ \textrm{stamas@}, \textrm{kenoslund@}, \textrm{karolhausman@}, \textrm{kanishkarao@}; \textrm{google.com}} \\ 
\large{Google} \\ 
\small{1600 Amphitheatre Parkway} Mountain View, CA  94043, *equal contribution
}
\begin{document}
% \maketitle

\twocolumn[{%
\renewcommand\twocolumn[1][]{#1}%
\maketitle
\begin{center}
    \centering
    \captionsetup{type=figure}
    \includegraphics[width=0.99\textwidth]{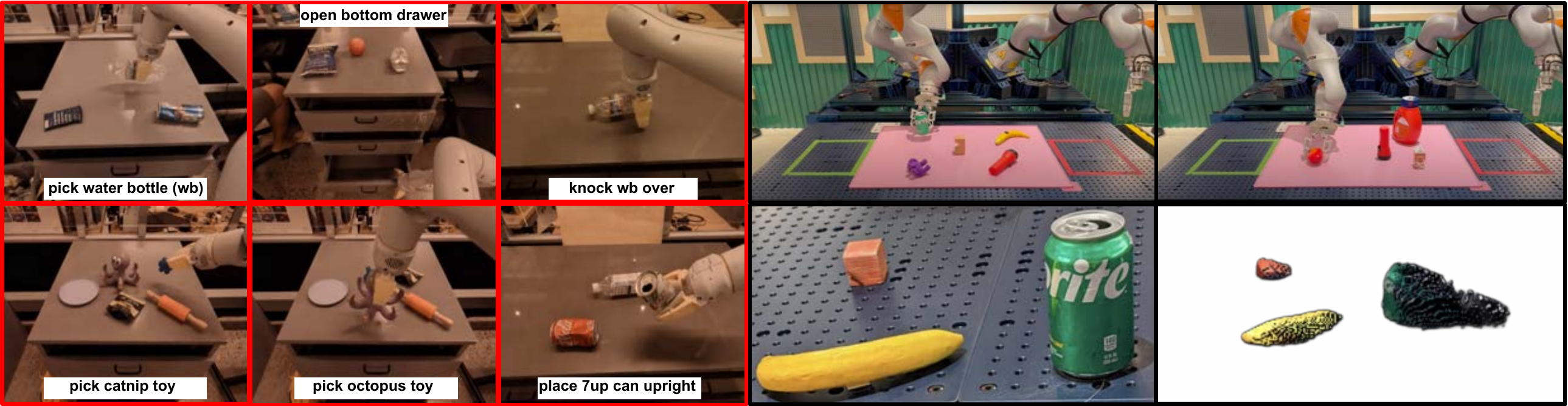}
    \captionof{figure}{\small{Robotics Transformer policies obtained via \textit{Self-Adaptive Robust Attention} (SARA) in action for three different modalities: vision, language and point clouds and varying sequence lengths (from $\sim 200$ to few thousand). \textbf{Left:} A $5$B vision-language-action model from the RT-2 class \cite{brohan2023rt2} (sequence length: $L=196$). The manipulation policy is conditioned on the text instruction. \textbf{Right:} A Point Cloud Transformer (PCT) (\cite{pct}) for grasping (point cloud sizes $L \in [800, 4000]$) and an exemplary PC input. Both policies use linear attention up-trained with SARA and provide computational speedups while maintaining high quality.} \label{fig:intro}}  
\end{center}
}] 
% \begin{figure*}[t]
% \begin{center}
%     \centering
%     \captionsetup{type=figure}
%     \includegraphics[width=0.99\textwidth]{img/intro-vis-compressed.pdf}
%     \captionof{figure}{\small{Robotics Transformer policies obtained via \textit{Self-Adaptive Robust Attention} (SARA) in action for three different modalities: vision, language and point clouds and varying sequence lengths (from $\sim 200$ to few thousand). \textbf{Left:} A $5$B vision-language-action model from the RT-2 class \cite{brohan2023rt2} (sequence length: $L=196$). The manipulation policy is conditioned on the text instruction. \textbf{Right:} A Point Cloud Transformer (PCT) (\cite{pct}) for grasping (point cloud sizes $L \in [800, 4000]$) and an exemplary PC input. Both policies use linear attention up-trained with SARA and provide computational speedups while maintaining high quality.} \label{fig:intro}}  
% \end{center}
% \end{figure*}
%\maketitle

%\thispagestyle{empty}
%\pagestyle{empty}
%%%%%%%%%%%%%%%%%%%%%%%%%%%%%%%%%%%%%%%%%%%%%%%%%%%%%%%%%%%%%%%%%%%%%%%%%%%%%%%%
\begin{abstract}
We present Self-Adaptive Robust Attention for Robotics Transformers (SARA-RT): a new paradigm for addressing the emerging challenge of scaling up Robotics Transformers (RT) for on-robot deployment. SARA-RT relies on the new method of fine-tuning proposed by us, called \textit{up-training}. It converts pre-trained or already fine-tuned Transformer-based robotic policies of quadratic time complexity (including massive billion-parameter vision-language-action models or VLAs), into their efficient linear-attention counterparts maintaining high quality. We demonstrate the effectiveness of SARA-RT by speeding up: (a) the class of recently introduced RT-2 models \cite{brohan2023rt2}, the first VLA robotic policies pre-trained on internet-scale data, as well as (b) Point Cloud Transformer (PCT) robotic policies operating on large point clouds. We complement our results with the rigorous mathematical analysis providing deeper insight into the phenomenon of SARA. 
\end{abstract}

\section{INTRODUCTION \& RELATED WORK}
The unprecedented semantic reasoning capabilities offered by Transformers (\cite{vaswani}, \cite{palm}, \cite{gpt},  \cite{weitay}, \cite{clip}, \cite{flam}, \cite{pali}, \cite{socratic}) for Robotics gave rise to the new field of machine learning, exploring Transformer-based models, often pre-trained on the internet-scale data, for robotic controllers. Those controllers enable abstract reasoning from the multi-modal input and have already led to several recent breakthroughs in Robotics, including: high-level planning with large language models (LLMs) (\cite{IchterBCFHHHIIJ22}, \cite{zengichter}, \cite{ren2023robots}, \cite{huangxia}), policies transforming natural language commands into on-robot executable code (\cite{liangzeng}, \cite{zengichter}), multi-modal sensor fusion \cite{palme}, finally the first vision-language-action robotic manipulation powered by massive vision language models \cite{brohan2023rt2}. Interestingly, even when not fine-tuned, Transformer models trained on massive web corpus seem to learn structural reinforcement learning priors that can be leveraged to conduct trajectory optimization for certain control tasks via in-context learning (\cite{oswald}, \cite{baichen}) with carefully designed prompts, as very recently shown in \cite{mirchandani2023large}.

Is the Transformer-driven revolution in Robotics a straightforward path to AGI and all that is left is to train larger and deeper models, pre-trained on even more massive datasets ? One of the challenges that is still not addressed, yet is of critical practical importance in Robotics, is a prohibitively expensive space and time complexity of the aforementioned models. For example, the $35\textrm{M}$-size parameter RT-1 model \cite{rt1}, providing dramatic generalization improvements over non-Transformer approaches and marking the birth of the class of Robotics Transformer architectures (RTs), even though of modest size, already operates with the frequency at most $3 \textrm{Hz}$. The problem only deepens for larger 1B+ parameter models such as RT-2 \cite{brohan2023rt2}. 

\begin{figure*}[h]
    \begin{center}
    \includegraphics[width=.99\linewidth]{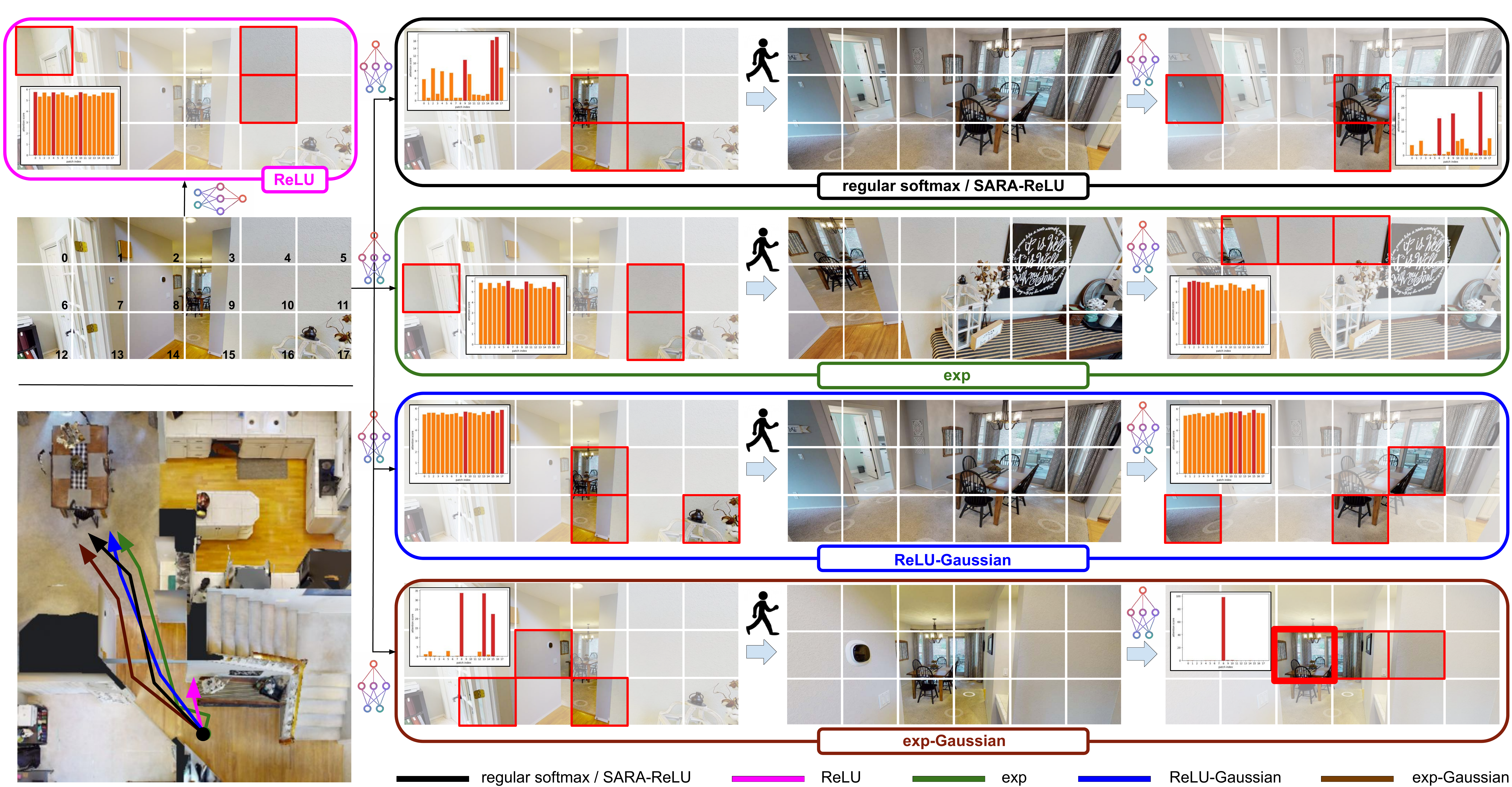}
    \caption{\small{VR navigation via VL attention models on \textrm{Matterport} environments (\cite{matterport}). The top-down view of the scene is in the lower-left corner. The agent's initial view with indexed image patches is right above. For each view, we highlight three patches with highest attention scores and present the distribution of the attention scores over all the patches. At any given point of time, the agent chooses one of them with probability proportional to the score. The initial parts of the executed trajectories are visualized. The trajectories are also included in the top-down view. The agent is given a text command: \textit{"Go to the table"} (or the corresponding image). The ReLU-variant (pink block) directly leads to the collision with the wall. For the exp-variant the agent can navigate, but is temporarily distracted by the unrelated object in the lower-right corner of the initial view. Both variants with Gaussian-matrix pre-processing (blue and brown blocks) apply $m=2048$ (patch/text embedding dimensionality for CLIP with ViT-B is $d=512$) and lead to efficient navigation. The attention scores distribution for the ReLU-Gaussian remains flat though. The SARA variant with $m=d$, and where matrix $\mathbf{G}$ is trained to mimic regular softmax-attention for every new view, produces indistinguishable (spiky) attention scores and leads to efficient navigation (black block). In practical applications involving Transformers, SARA-training is conducted once in the so-called \textit{up-training} process (see: Sec. \ref{sec:sara_birth}). \label{fig:VR_navigation} \vspace{-6mm}}}
    \end{center}
\end{figure*}

In this work we make a step towards solving it. We present \textit{Self-Adaptive Robust Attention for Robotics Transformers} (SARA-RT): a new paradigm for addressing the emerging challenge of scaling up Robotics Transformers for on-robot deployment. SARA-RT relies on the new method of fine-tuning proposed by us, called \textit{up-training}. It leverages it to convert pre-trained or already fine-tuned Transformer-based robotic policies of quadratic space and time complexity (including massive billion-parameter vision-language-action models or VLAs), into their efficient linear-attention counterparts maintaining high quality. We demonstrate the effectiveness of SARA-RT by speeding up: (a) the class of the aforementioned RT-2 models, the first VLA robotic policies pre-trained on internet-scale data, as well as (b) Point Cloud Transformer (PCT) \cite{pct} robotic policies operating on large point clouds (see: Fig. \ref{fig:intro}). We complement our results with the rigorous mathematical analysis explaining the phenomenon of the effectiveness of SARA-RT's.

\section{SELF-ADAPTIVE ROBUST ATTENTION VIA THE KERNELIZATION VIEWPOINT}

\subsection{Developing intuition: zero-shot navigation via VL models}
\label{sec:intuition}

Consider a vision-based VR navigation agent, conditioned on the images of the target objects: $t_{1},...,t_{M}$ or the corresponding natural language commands (e.g. \textit{"Go to the table"}). For simplicity, we will assume that the targets do not need to be visited in any particular order. First we show that vision-language (VL) models can be used in a zero-shot manner for steering the agent. We take the VR environments from the \textrm{Matterport} (\cite{matterport}) real estate virtual tour sites. 

We consider a purely zero-shot attention-based control mechanism, where the action $\mathbf{a}_{i}$ of the agent corresponding to the particular target $t_{i}$ ($i=1,...,M$) is defined as follows:
\begin{equation}
    \begin{cases}
\mathbf{a}_{i}= \sum_{j=1}^{N} s(i,j)\bar{\mathbf{a}}_{j}, \\
        s(i,j)=\frac{\mathrm{K}(\mathbf{q}_{i},\mathbf{k}_{j})}{\sum_{l=1}^{N}\mathrm{K}(\mathbf{q}_{i},\mathbf{k}_{l})}
    \end{cases}
\end{equation}
and: \textbf{(1)} $N$ stands for the number of the patches of the \textit{patchified} version of the input image, \textbf{(2)} $\{\mathbf{k}_{i}\}_{i=1,...,N} \subseteq \mathbb{R}^{d_{QK}}$ is the set of their latent embeddings (keys), \textbf{(3)} $\mathbf{q}_{i} \in \mathbb{R}^{d_{QK}}$ is the embedding of the target image (or the corresponding text command) (query), \textbf{(4)} $\mathrm{K}:\mathbb{R}^{d_{QK}} \times \mathbb{R}^{d_{QK}} \rightarrow \mathbb{R}$ is the kernel function (similarity measure) defining how the agent \textit{attends} and \textbf{(5)} $\mathbf{\bar{a}}_{j}$ are \textit{base actions} (e.g. left/right turns, going forward/back). We call the coefficients $s(i,j)$ the \textit{attention scores}. We inherit the embeddings from the pre-trained \textrm{CLIP} model \cite{clip} with the ViT-B vision backend.

\textrm{CLIP} model as a zero-shot navigation agent is presented in Fig. \ref{fig:VR_navigation} (top black block). As a baseline, we applied the \textit{softmax-kernel} $\mathrm{K}(\mathbf{x},\mathbf{y})\overset{\mathrm{def}}{=}\exp(\mathbf{x}^{\top}\mathbf{y})$, a default choice for attention. To visualize the main concepts, we interpret $\mathbf{a}_{i}$ as the expected base action over discrete probabilistic distribution with probabilities $(s(i,j))_{j=1,...,N}$ and sample a base action from its variant truncated to the three top-score actions. The base action corresponds to "clicking" on the random pixel of the corresponding patch. As we see, the baseline agent can reach the target without any additional task-specific fine-tuning of the \textrm{CLIP} embeddings.

Now consider the hypothetical case when the set of targets is large, even comparable with $N$. In that case, quadratic space and time complexity $O(MN)$ might become prohibitively expensive. This can be addressed algorithmically if $\mathrm{K}$ admits a bi-linearization, i.e. can be re-written as a linear (dot-product) kernel in the new input-space:
\begin{equation}
\label{eq:linear}
\mathrm{K}(\mathbf{x},\mathbf{y})=\mathbb{E}[\phi(\mathbf{x})^{\top}\phi(\mathbf{y})]    
\end{equation}
for some (randomized) map: $\phi:\mathbb{R}^{d_{QK}} \rightarrow \mathbb{R}^{m}$. In that case, each $\mathbf{a}_{i}$ can be approximated as: 
\begin{equation}
    \begin{cases}
\mathbf{\tilde{a}}_{i}= \frac{\Psi \phi(\mathbf{q}_{i})}{\Gamma \phi(\mathbf{q}_{i})}, \\
\Psi =  \sum_{j=1}^{N}\mathbf{\bar{a}}_{j}\phi^{\top}(\mathbf{k}_{j}),\\
\Gamma = \sum_{j=1}^{N}\phi^{\top}(\mathbf{k}_{j})
\end{cases}
\end{equation}
and thus all $\mathbf{\tilde{a}}_{i}$ can be computed in time \& space $O(M+N)$ (by computing $\Psi$ and $\Gamma$ first). Linear attention mechanisms is an area of active research (\cite{performers}, \cite{rfattention}, \cite{flurka}, \cite{cvrs}, \cite{schlag}, \cite{irie}, \cite{fplusplus}, \cite{favorsharp}, \cite{cosformer}), yet most of the theoretical focus is on developing low-variance variants unbiasedly approximating original softmax-kernel attention (known as particularly expressive due to its combinatorial nature with spiking attention patterns). Those apply random $\phi$ via random Gaussian projections. This computational overhead makes them practically attractive only for $M, N$ large enough (usually $\mathrm{4K+}$). Besides, linear attention usually produces some performance gap as compared to its brute-force softmax counterpart.  

On the other side of the spectrum are linear attention models with $\phi=\phi_{f}$ given as: $\phi_{f}(\mathbf{z})=(f(z_{1}),...,f(z_{d_{QK}}))^{\top}$ for some $f:\mathbb{R} \rightarrow \mathbb{R}$, where $f$ is an easy to compute function, such as: $\mathrm{ReLU}$ or $\mathrm{exp}$. Those are in practice very fast and provide computational gains for $M, N$ as small as $128$, but are well known for being less accurate than the variants providing unbiased estimation. For that reason, they were not of much theoretical interests.

At first glance, we confirm these findings for the zero-shot VL navigation. The results for $\phi_{f}$ with $f=\mathrm{ReLU}$ and $f=\mathrm{exp}$ are presented in Fig. \ref{fig:VR_navigation} in the pink- and green-border box respectively.
The \textrm{ReLU}-variant leads to the agent hitting the wall and the \textrm{exp}-variant, even though reaches the target, is distracted from it at some point, attending to the unrelated object. Finally, none of them provides distinctive attention spikes, characteristic for the regular softmax-variant (sharply attending to the most relevant image regions), but a relatively flat distribution of the attention scores.  

\subsection{Improving $\phi_{f}$ for zero-shot navigation via randomization}

There exists however a very simple trick that improves $\phi_{f}$, making our agent a more efficient zero-shot navigator. It suffices to preprocess the input to $\phi_{f}$ by the random matrix. To be more specific, we define the randomized version of $\phi_{f}$ as: $\phi^{\mathrm{rand}}_{f}(\mathbf{z})=f(\mathbf{G}\mathbf{z})$, where $\mathbf{G} \in \mathbb{R}^{m \times d_{QK}}$ is a Gaussian matrix with entries taken i.i.d from $\mathcal{N}(0,1)$ (sampled once and used for all inputs $\mathbf{z}$) and $f$ is applied element-wise.

As we show in Fig. \ref{fig:VR_navigation} (blue- and brown-border boxes), this modification enables both the $\mathrm{ReLU}$ and $\mathrm{exp}$ variants to reach their targets with no distractions and furthermore already leads to spiking attention patterns for the $\mathrm{exp}$ variant. This will become more clear later. In Sec. \ref{sec:theory}, we show that  $\phi_{\mathrm{exp}}^{\mathrm{random}}$ is intrinsically related to the unbiased estimators of the original softmax-attention. 

\subsection{The birth of SARA: learnable pre-processing}
\label{sec:sara_birth}

Randomized mappings $\phi^{\mathrm{rand}}_{f}$ in Fig. \ref{fig:VR_navigation} apply $m=2048$ and thus, as we discussed before, may not be relevant for tasks with $M, N < 1K$. The core idea behind \textit{Self-Adaptive Robust Attention} is that rather than being Gaussian, matrix $\mathbf{G}$ can be learned. We define SARA mapping $\phi^{\mathrm{SARA}}_{f}$, acting on the raw $d$-dimensional embeddings: $(\mathbf{x}_{i})_{i=1}^{M}$, $(\mathbf{y}_{j})_{j=1}^{N}$ of objects/tokens (rather than queries/keys $(\mathbf{q}_{i})_{i=1}^{M}/(\mathbf{k}_{j})_{j=1}^{N}$ that for instance for Transformers are obtained from the former by linear projections $W_{Q}$ and $W_{K}$) as follows for learnable $\mathbf{v} \in \mathbb{R}^{m}$, $\mathbf{G}_{Q}, \mathbf{G}_{K}\in \mathbb{R}^{m \times d}$ and a \textit{Hadamard product} $\odot$: 
\begin{equation}
\phi^{\mathrm{SARA}}_{f, 1}(\mathbf{z})=  \mathbf{v} \odot f(\mathbf{G}_{Q}\mathbf{z}), \textrm{     }
\phi^{\mathrm{SARA}}_{f, 2}(\mathbf{z})=  \mathbf{v} \odot f(\mathbf{G}_{K}\mathbf{z})
\end{equation}
The kernel values $\mathrm{K}(\mathbf{x}_{i},\mathbf{y}_{j})$ (for: $\mathrm{K}:\mathbb{R}^{d} \times \mathbb{R}^{d} \rightarrow \mathbb{R}$) are then expressed as dot-products $\phi^{\mathrm{SARA}}_{f, 1}(\mathbf{x}_{i})^{\top}\phi^{\mathrm{SARA}}_{f, 2}(\mathbf{y}_{j})$, as in Eq. \ref{eq:linear}. Instead of one mapping $\phi$, now there are two: $\phi_{1},\phi_{2}$, but we have perfect equivalence since in Eq. \ref{eq:linear}, $\mathbf{x}/\mathbf{y}$ corresponds to the query/key $\mathbf{q}_{i}/\mathbf{k}_{j}$ obtained by two different mapping from $\mathbf{x}_{i}/\mathbf{y}_{j}$. Furthermore, the analogous computational complexity analysis follows.

It turns out that then one can take $m=d_{QK}$, making SARA a practical mechanism even for $M, N < 1K$. And indeed, even though $M, N > 1K$ for our point clouds empirical results in Sec. \ref{sec:pct-exps}, the speed improvements for the RT-2 from Sec. \ref{sec:exp-rt2} were obtained for $M, N < 200$. As showed in Fig. \ref{fig:VR_navigation}, with learnable $\mathbf{G}$ and $m=d_{QK}$ (here $d=d_{QK}$), one can learn linear attention mechanisms that for all practical purposes produce indistinguishable attention patterns as compared with the regular softmax-attention, thus leading to the same navigation trajectories for our VR tasks. 

The VL navigation is a convenient "macroscopic" case study, but our main targets are Transformer-architectures for Robotics, where queries for the whole images or text instructions are replaced by their counterparts for image-patches, text-tokens or even individual point of the point clouds (PCs), produced by the Transformer-encoded policies. 

We thus propose the process of the self-adaptation of their attention modules, that we refer to as \textit{up-training}, which can be implemented as replacing regular softmax-attention with its $\phi_{f}$-encoded variants and fine-tuning them on data from the downstream robotic tasks. Learnable pre-processing corresponds here to fine-tuning matrices $\mathrm{W}_{Q}$ and $\mathrm{W}_{K}$ (\cite{vaswani}) from Transformers' attention modules, but in the linear attention context.  
\section{THE MATHEMATICS OF SARA-RTs}
\label{sec:theory}

As a warm-up, we show that a linear attention mechanism using $\phi^{\mathrm{random}}_{\mathrm{exp}} : \mathbb{R}^{d_{QK}} \rightarrow \mathbb{R}^{m}$ leads to the unbiased estimation of the softmax-kernel up to the fixed constant if the inputs to the softmax-kernel have fixed length (e.g. \textrm{CLIP} embeddings that are by default $L_{2}$-normalized).

\begin{lemma}
\label{lemma:base}
The following holds for $\mathbf{x},\mathbf{y} \in \mathbb{R}^{d_{QK}}$ with $\|\mathbf{x}\|=\|\mathbf{y}\|=r$ and softmax-kernel $\mathrm{K}:\mathbb{R}^{d_{QK}} \times \mathbb{R}^{d_{QK}} \rightarrow \mathbb{R}$:
\vspace{-5mm}
\begin{equation}
\label{eq:unbiased}
m \cdot \exp(r^{2}) \cdot \mathrm{K}(\mathbf{x},\mathbf{y}) = \mathbb{E}[(\phi^{\mathrm{random}}_{\mathrm{exp}}(\mathbf{x}))^{\top}\phi^{\mathrm{random}}_{\mathrm{exp}}(\mathbf{y})]    
\end{equation}
\end{lemma}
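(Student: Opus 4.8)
The plan is to evaluate the right-hand side directly by exploiting the product structure of $\phi^{\mathrm{random}}_{\mathrm{exp}}$ together with the independence of the rows of $\mathbf{G}$. First I would denote by $\mathbf{g}_{1},\dots,\mathbf{g}_{m} \in \mathbb{R}^{d_{QK}}$ the rows of $\mathbf{G}$, which are i.i.d. standard Gaussian vectors, so that the $i$-th coordinate of $\phi^{\mathrm{random}}_{\mathrm{exp}}(\mathbf{z})$ equals $\exp(\mathbf{g}_{i}^{\top}\mathbf{z})$. The inner product then unfolds as $(\phi^{\mathrm{random}}_{\mathrm{exp}}(\mathbf{x}))^{\top}\phi^{\mathrm{random}}_{\mathrm{exp}}(\mathbf{y}) = \sum_{i=1}^{m}\exp(\mathbf{g}_{i}^{\top}\mathbf{x})\exp(\mathbf{g}_{i}^{\top}\mathbf{y}) = \sum_{i=1}^{m}\exp\!\big(\mathbf{g}_{i}^{\top}(\mathbf{x}+\mathbf{y})\big)$, where merging the two exponentials into a single one applied to $\mathbf{x}+\mathbf{y}$ is the crucial step that later lets the fixed-length assumption factor out cleanly.

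By linearity of expectation and the fact that the rows are identically distributed, this reduces to $m\,\mathbb{E}\big[\exp(\mathbf{g}^{\top}(\mathbf{x}+\mathbf{y}))\big]$ for a single standard Gaussian vector $\mathbf{g}\sim\mathcal{N}(0,\mathbf{I}_{d_{QK}})$. Next I would observe that the scalar $\mathbf{g}^{\top}(\mathbf{x}+\mathbf{y})$ is a centered Gaussian with variance $\|\mathbf{x}+\mathbf{y}\|^{2}$, since projecting a standard Gaussian vector onto a fixed vector $\mathbf{w}$ yields a one-dimensional Gaussian of variance $\|\mathbf{w}\|^{2}$. The remaining expectation is then exactly the moment generating function of a Gaussian evaluated at $1$: for $Z\sim\mathcal{N}(0,\sigma^{2})$ one has $\mathbb{E}[\exp(Z)]=\exp(\sigma^{2}/2)$, giving $\mathbb{E}\big[\exp(\mathbf{g}^{\top}(\mathbf{x}+\mathbf{y}))\big]=\exp\!\big(\tfrac{1}{2}\|\mathbf{x}+\mathbf{y}\|^{2}\big)$.

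Finally I would invoke the hypothesis $\|\mathbf{x}\|=\|\mathbf{y}\|=r$ to expand $\|\mathbf{x}+\mathbf{y}\|^{2}=\|\mathbf{x}\|^{2}+2\mathbf{x}^{\top}\mathbf{y}+\|\mathbf{y}\|^{2}=2r^{2}+2\mathbf{x}^{\top}\mathbf{y}$, so that $\tfrac{1}{2}\|\mathbf{x}+\mathbf{y}\|^{2}=r^{2}+\mathbf{x}^{\top}\mathbf{y}$. Substituting back yields $m\exp(r^{2})\exp(\mathbf{x}^{\top}\mathbf{y})=m\exp(r^{2})\mathrm{K}(\mathbf{x},\mathbf{y})$, which is precisely the claimed identity. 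The argument is essentially a one-line Gaussian moment generating function computation, so I do not anticipate a genuine obstacle; the only point demanding care is recognizing that the constant $\exp(r^{2})$ is exactly the artifact of the $2r^{2}$ term produced by the equal-norm assumption, and that without this normalization the prefactor would instead carry the input-dependent $\exp\!\big(\tfrac{1}{2}(\|\mathbf{x}\|^{2}+\|\mathbf{y}\|^{2})\big)$, which would break the clean ``up to a fixed constant'' statement and is what motivates restricting to $L_{2}$-normalized embeddings such as those from \textrm{CLIP}.
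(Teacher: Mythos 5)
Your proof is correct. The paper itself takes a shortcut: it observes that $\phi^{\mathrm{random}}_{\mathrm{exp}}(\mathbf{z})$ differs from the positive random feature map $\phi^{+}(\mathbf{z}) = \frac{1}{\sqrt{m}}\exp(-\frac{\|\mathbf{z}\|^{2}}{2})\exp(\mathbf{G}\mathbf{z})$ of \cite{performers} only by the deterministic factor $\sqrt{m}\exp(r^{2}/2)$ on norm-$r$ inputs, and then imports the unbiasedness statement $\mathrm{K}(\mathbf{x},\mathbf{y})=\mathbb{E}[\phi^{+}(\mathbf{x})^{\top}\phi^{+}(\mathbf{y})]$ as a black box, so the whole proof is a two-line rescaling. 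What you have done is unpack that cited lemma and prove the identity from scratch: merging the exponentials into $\exp(\mathbf{g}_{i}^{\top}(\mathbf{x}+\mathbf{y}))$, invoking the Gaussian moment generating function $\mathbb{E}[\exp(Z)]=\exp(\sigma^{2}/2)$ for $Z\sim\mathcal{N}(0,\|\mathbf{x}+\mathbf{y}\|^{2})$, and expanding $\|\mathbf{x}+\mathbf{y}\|^{2}=2r^{2}+2\mathbf{x}^{\top}\mathbf{y}$. This is exactly the computation underlying the Performers lemma, so the two arguments coincide mathematically; yours buys self-containment and makes transparent where the prefactor $m\exp(r^{2})$ comes from (your closing remark about the general $\exp(\frac{1}{2}(\|\mathbf{x}\|^{2}+\|\mathbf{y}\|^{2}))$ prefactor is a correct and useful observation the paper leaves implicit), while the paper's version buys brevity and an explicit link to the random-feature literature it builds on in the subsequent concentration lemma.
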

\begin{proof}
Note that the \textit{positive random feature} vector $\phi^{+}(\mathbf{z})$ from \cite{performers} is of the following form for any $\mathbf{z} \in d_{QK}$:
\begin{equation}
\phi^{+}(\mathbf{z}) = \frac{1}{\sqrt{m}}\exp(-\frac{\|\mathbf{z}\|^{2}}{2})\exp(\mathbf{G}\mathbf{z})   
\vspace{-1.5mm}
\end{equation}
By Lemma 1 from \cite{performers}: $\mathrm{K}(\mathbf{x},\mathbf{y})=\mathbb{E}[\phi^{+}(\mathbf{x})^{\top}\phi^{+}(\mathbf{y})]$.  That implies Eq. \ref{eq:unbiased} and completes the proof.
\end{proof}
One can also derive corresponding concentration results.
\begin{lemma}
Given the conditions from Lemma \ref{lemma:base}, $g^{m}_{r,t}=\frac{t}{\sqrt{m}}\exp(r^{2}(2\cos(\theta)+1))\sqrt{(1-\exp(-2r^{2}(1+\cos(\theta))))}$ for an angle $\theta$ between $\mathbf{x},\mathbf{y}$, the following holds for $t>0$:
\begin{equation}
\mathbb{P}[|\frac{(\phi^{\mathrm{random}}_{\mathrm{exp}}(\mathbf{x}))^{\top}\phi^{\mathrm{random}}_{\mathrm{exp}}(\mathbf{y})}{m \exp(r^{2})} - \mathrm{K}(\mathbf{x},\mathbf{y})| > g^{m}_{r,t}] \leq \frac{1}{t^{2}}    
\end{equation}
\end{lemma}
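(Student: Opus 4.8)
The factor $1/t^2$ on the right-hand side together with the explicit linear dependence on $t$ inside $g^m_{r,t}$ is the hallmark of \emph{Chebyshev's inequality}. The plan is therefore to recognize the normalized dot-product as an unbiased estimator of $\mathrm{K}(\mathbf{x},\mathbf{y})$ (which is exactly the content of Lemma \ref{lemma:base}), to compute its variance in closed form, and to verify that $g^m_{r,t}/t$ is precisely its standard deviation. Chebyshev then delivers the stated bound with no further work.

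First I would unpack the estimator. Writing $\mathbf{g}_1,\dots,\mathbf{g}_m$ for the rows of $\mathbf{G}$, which are i.i.d.\ $\mathcal{N}(0,\mathbf{I}_{d_{QK}})$, and using $\phi^{\mathrm{random}}_{\mathrm{exp}}(\mathbf{z})_i = \exp(\mathbf{g}_i^{\top}\mathbf{z})$, the dot-product collapses into a sum of i.i.d.\ terms:
\begin{equation}
(\phi^{\mathrm{random}}_{\mathrm{exp}}(\mathbf{x}))^{\top}\phi^{\mathrm{random}}_{\mathrm{exp}}(\mathbf{y}) = \sum_{i=1}^{m}\exp\!\big(\mathbf{g}_i^{\top}(\mathbf{x}+\mathbf{y})\big).
\end{equation}
Setting $Z = \frac{1}{m\exp(r^{2})}\sum_{i=1}^{m}\exp(\mathbf{g}_i^{\top}(\mathbf{x}+\mathbf{y}))$, Lemma \ref{lemma:base} gives $\mathbb{E}[Z]=\mathrm{K}(\mathbf{x},\mathbf{y})$, and by independence $\mathrm{Var}(Z)=\frac{1}{m}\mathrm{Var}(X)$, where $X=\exp(-r^{2})\exp(\mathbf{g}^{\top}(\mathbf{x}+\mathbf{y}))$ for a single $\mathbf{g}\sim\mathcal{N}(0,\mathbf{I}_{d_{QK}})$.

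The core computation is the single-term variance, which I would obtain from the Gaussian moment generating function $\mathbb{E}[\exp(\mathbf{g}^{\top}\mathbf{w})]=\exp(\tfrac{1}{2}\|\mathbf{w}\|^{2})$. Applying it with $\mathbf{w}=2(\mathbf{x}+\mathbf{y})$ for the second moment and $\mathbf{w}=\mathbf{x}+\mathbf{y}$ for the mean, together with the identity $\|\mathbf{x}+\mathbf{y}\|^{2} = 2r^{2}(1+\cos\theta)$ (which follows from $\mathbf{x}^{\top}\mathbf{y}=r^{2}\cos\theta$), yields
\begin{equation}
\mathrm{Var}(X) = \exp\!\big(2r^{2}+4r^{2}\cos\theta\big) - \exp\!\big(2r^{2}\cos\theta\big).
\end{equation}
Factoring out $\exp(2r^{2}(2\cos\theta+1))$ rewrites this as $\exp(2r^{2}(2\cos\theta+1))\big(1-\exp(-2r^{2}(1+\cos\theta))\big)$, so that $\sqrt{\mathrm{Var}(Z)} = \tfrac{1}{\sqrt{m}}\exp(r^{2}(2\cos\theta+1))\sqrt{1-\exp(-2r^{2}(1+\cos\theta))}$, which is exactly $g^m_{r,t}/t$.

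The proof then concludes by invoking Chebyshev's inequality, $\mathbb{P}[|Z-\mathbb{E}[Z]|>t\sqrt{\mathrm{Var}(Z)}]\le 1/t^{2}$, with the standard deviation just computed. There is no genuine conceptual obstacle here: the only delicate part is the algebraic bookkeeping required to match the factored variance against the somewhat opaque closed form of $g^m_{r,t}$, where one must track the exponents carefully through the moment computation and correctly expand $\|\mathbf{x}+\mathbf{y}\|^{2}$.
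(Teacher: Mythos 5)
Your proposal is correct and follows essentially the same route as the paper: identify the normalized dot-product as an unbiased estimator, compute its variance (your MGF calculation reproduces exactly the formula $\frac{1}{m}\exp(-(\|\mathbf{x}\|^2+\|\mathbf{y}\|^2))(\exp(2\|\mathbf{z}\|^2)-\exp(\|\mathbf{z}\|^2))$ with $\mathbf{z}=\mathbf{x}+\mathbf{y}$ that the paper imports as Lemma 2 of \cite{performers}), substitute $\|\mathbf{z}\|^2=2r^2(1+\cos\theta)$, and apply Chebyshev's inequality. The only difference is that you derive the variance from first principles where the paper cites it, which makes your argument self-contained but otherwise identical.
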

\begin{proof}
By Lemma 2 from \cite{performers}, the variance of the estimator $\widehat{\mathrm{K}}(\mathbf{x},\mathbf{y})=\phi^{+}(\mathbf{x})^{\top}\phi^{+}(\mathbf{y})$ satisfies: $\mathrm{Var}(\widehat{\mathrm{K}}(\mathbf{x},\mathbf{y}))=\frac{1}{m}\exp(-(\|\mathbf{x}\|_{2}^{2}+\|\mathbf{y}\|_{2}^{2}))(\exp(2\|\mathbf{z}\|_{2}^{2})-\exp(\|\mathbf{z}\|_{2}^{2}))$ for $\mathbf{z}=\mathbf{x}+\mathbf{y}$. To complete the proof, it suffices to note that
$\widehat{\mathrm{K}}(\mathbf{x},\mathbf{y})=\frac{(\phi^{\mathrm{random}}_{\mathrm{exp}}(\mathbf{x}))^{\top}\phi^{\mathrm{random}}_{\mathrm{exp}}(\mathbf{y})}{m \exp(r^{2})}$, $\|\mathbf{z}\|^{2}_{2}=2r^{2}(1+\cos(\theta))$ and apply Chebyshev's Inequality \cite{boucheron}.
\end{proof}

Our main result shows that for the appropriate $\mathbf{v},\mathbf{G}, f$, mappings $\phi^{\mathrm{SARA}}_{f, u}$ ($u=1,2$) lead to the accurate approximation of the attention score matrix $\mathbf{A}=[\mathrm{K}(\mathbf{q}_{i},\mathbf{k}_{j})]_{i=1,...,M}^{j=1,...,N} \in \mathbb{R}^{M \times N}$ in the Transformer, importantly with the number of trainable parameters at most \textbf{logarithmic} in $\mathbf{A}$ size. 

\begin{theorem}
Consider normalized Transformer's attention layer with queries $(\mathbf{q}_{i})_{i=1}^{M}$ and keys $(\mathbf{k}_{j})_{j=1}^{N}$ of length $r$. Denote: $\tau=\min_{i,j} \mathrm{K}(\mathbf{q}_{i},\mathbf{k}_{j})$, $\rho=\max_{i,j} \mathrm{K}(\mathbf{q}_{i},\mathbf{k}_{j})$.  

Take $m = \lceil \frac{2\rho^{2}}{\delta^{2}\tau^{2}}\log(2MN)\exp(-\frac{r^{2}}{A}) \rceil + 1$ for $A<0$, $\delta>0$. Then there exist $\mathbf{v} \in \mathbb{R}^{m},\mathbf{G}_{1},\mathbf{G}_{2} \in \mathbb{R}^{m \times d},f:\mathbb{R} \rightarrow \mathbb{R}$ such that the approximate attention matrix $\widehat{\mathbf{A}}$ (implicitly) given by the mappings $\phi^{\mathrm{SARA}}_{f, u}$ satisfies:  
\vspace{-1.5mm}
\begin{equation}
\|\mathbf{A}-\widehat{\mathbf{A}}\|_{\infty} \leq \delta
\end{equation}
\begin{proof}
Take $f=\exp$. We will show that if: 
\begin{align}
\begin{split}
\mathbf{G}_{1} = \sqrt{1-4A}\mathbf{G}\mathbf{W}_{Q}, \textrm{   } \mathbf{G}_{2}=\sqrt{1-4A}\mathbf{G}\mathbf{W}_{K}, \\ \mathbf{v}=(1-4A)^{\frac{d_{QK}}{4}}(\exp(A \|\mathbf{g}_{1}\|^{2}),...,\exp(A \|\mathbf{g}_{m}\|^{2}))^{\top},
\end{split}    
\end{align}
where $\mathbf{G} \in \mathbb{R}^{m \times d_{QK}}$ is a Gaussian matrix with iid entries from $\mathcal{N}(0,1)$ and $\mathbf{g}_{1},...,\mathbf{g}_{m}$ are its rows, then the resulting $\widehat{\mathbf{A}}$ satisfies the condition with nonzero probability for $m$ given in the statement. Here $\mathbf{W}_{Q}$ and $\mathbf{W}_{K}$ are standard linear projections of the raw embeddings $(\mathbf{x}_{i})_{i=1}^{M}$ and $(\mathbf{y}_{j})_{j=1}^{N}$, defining queries $(\mathbf{q}_{i})_{i=1}^{M}$ and keys $(\mathbf{k}_{j})_{j=1}^{N}$ and we assume that attention matrix entries $\tilde{\mathbf{A}}_{i,j}$ before row-normalization satisfy: $\tilde{\mathbf{A}}_{i,j}=\mathrm{K}(\mathbf{q}_{i},\mathbf{k}_{j})=\exp(\mathbf{q}_{i}^{\top}\mathbf{k}_{j})$ (softmax attention).

Take $\epsilon = \frac{\delta}{\rho}$. Denote by $p_{\epsilon}$ the probability of an event $E(\epsilon)=\{\exists_{i,j} |\widehat{\mathrm{K}}(\mathbf{q}_{i},\mathbf{k}_{j})-\mathrm{K}(\mathbf{q}_{i},\mathbf{k}_{j})| > \epsilon \mathrm{K}(\mathbf{q}_{i},\mathbf{k}_{j})\}$. Note that by Theorem 3.1 from \cite{fplusplus}, $\frac{\phi^{\mathrm{SARA}}_{f,1}(\mathbf{x}_{i})^{\top}\phi^{\mathrm{SARA}}_{f,2}(\mathbf{y}_{j})}{m\exp(r^{2})}$ is an unbiased estimator of $\mathrm{K}(\mathbf{q}_{i},\mathbf{k}_{j})$. Furthermore, by Theorem 4.2. from \cite{fplusplus}, the probability that 
$|\frac{\phi^{\mathrm{SARA}}_{f,1}(\mathbf{x}_{i})^{\top}\phi^{\mathrm{SARA}}_{f,2}(\mathbf{y}_{j})}{m\exp(r^{2})}-\mathrm{K}(\mathbf{q}_{i},\mathbf{k}_{j})| > \tau \epsilon$ is at most $r_{\tau \epsilon}=2\exp(-\frac{m\tau^{2}\epsilon^{2}}{2}\exp(\frac{r^{2}}{A}))$. Thus, by the union bound, we get: $p_{\epsilon} \leq MN r_{\tau \epsilon}$. Note that if $E(\epsilon)$ does not hold, then $|\mathbf{A}_{i,j}-\widehat{\mathbf{A}}_{i,j}| \leq \epsilon \rho$ for every $i,j$. That completes the proof, since for $m$ defined in the statement of the theorem, $p_{\epsilon} < 1$.
\end{proof}
\end{theorem}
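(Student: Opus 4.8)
The plan is a probabilistic-method (first-moment) argument: fix $f = \exp$, exhibit a \emph{random} construction of $(\mathbf{G}_1, \mathbf{G}_2, \mathbf{v})$ that meets the uniform error bound with positive probability, and conclude that some deterministic realization must therefore exist. The conceptual heart of the construction is to recognize that, under the reparameterization $\mathbf{G}_1 = \sqrt{1-4A}\,\mathbf{G}\mathbf{W}_Q$, $\mathbf{G}_2 = \sqrt{1-4A}\,\mathbf{G}\mathbf{W}_K$ for a standard Gaussian $\mathbf{G}$, together with the per-coordinate vector $\mathbf{v} = (1-4A)^{d_{QK}/4}(\exp(A\|\mathbf{g}_1\|^2),\dots,\exp(A\|\mathbf{g}_m\|^2))^\top$, the SARA dot product $\phi^{\mathrm{SARA}}_{f,1}(\mathbf{x}_i)^\top\phi^{\mathrm{SARA}}_{f,2}(\mathbf{y}_j)$ realizes exactly the generalized-exponential (FAVOR++) positive random-feature estimator of \cite{fplusplus} for the softmax kernel, whose free variance-reduction scalar is precisely the $A < 0$ appearing in the statement. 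This identification is the bridge that lets us import the sharp estimator guarantees of \cite{fplusplus} wholesale.

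Granting that identification, two results from \cite{fplusplus} supply everything downstream. Unbiasedness (their Theorem 3.1) gives that $\widehat{\mathrm{K}}(\mathbf{q}_i,\mathbf{k}_j) := \frac{\phi^{\mathrm{SARA}}_{f,1}(\mathbf{x}_i)^\top\phi^{\mathrm{SARA}}_{f,2}(\mathbf{y}_j)}{m\exp(r^2)}$ has mean $\mathrm{K}(\mathbf{q}_i,\mathbf{k}_j)$, and their concentration bound (Theorem 4.2) gives $\mathbb{P}[|\widehat{\mathrm{K}} - \mathrm{K}| > \tau\epsilon] \le r_{\tau\epsilon} = 2\exp(-\tfrac{m\tau^2\epsilon^2}{2}\exp(\tfrac{r^2}{A}))$. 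I would then set $\epsilon = \delta/\rho$ to move between absolute and relative error. Defining the bad event $E(\epsilon) = \{\exists\, i,j : |\widehat{\mathrm{K}} - \mathrm{K}| > \epsilon\mathrm{K}\}$ and using $\mathrm{K}(\mathbf{q}_i,\mathbf{k}_j) \ge \tau$ to dominate each per-entry threshold by $\tau\epsilon$, a union bound over all $MN$ entries yields $\mathbb{P}[E(\epsilon)] \le MN\,r_{\tau\epsilon}$; moreover, on the complement of $E(\epsilon)$ every entry obeys $|\mathbf{A}_{i,j} - \widehat{\mathbf{A}}_{i,j}| \le \epsilon\,\mathrm{K}(\mathbf{q}_i,\mathbf{k}_j) \le \epsilon\rho = \delta$, which is exactly $\|\mathbf{A} - \widehat{\mathbf{A}}\|_\infty \le \delta$. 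It then remains to solve $2MN\exp(-\tfrac{m\tau^2\epsilon^2}{2}\exp(\tfrac{r^2}{A})) < 1$ for $m$, giving $m > \tfrac{2\rho^2}{\delta^2\tau^2}\log(2MN)\exp(-\tfrac{r^2}{A})$, which the ceiling-plus-one choice of $m$ in the statement satisfies; hence $\mathbb{P}[E(\epsilon)] < 1$ and a good realization is guaranteed to exist.

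The step I expect to be the genuine obstacle is the very first one: verifying rigorously that the displayed reparameterization of $(\mathbf{G}_1, \mathbf{G}_2, \mathbf{v})$ reproduces the FAVOR++ feature map exactly, tracking the global factor $(1-4A)^{d_{QK}/4}$ and the per-row corrections $\exp(A\|\mathbf{g}_k\|^2)$ so that both the normalizing constant $m\exp(r^2)$ and the variance expression feeding Theorem 4.2 emerge in the precise form those theorems require. Once that algebraic matching is confirmed, the remaining pipeline --- unbiasedness, the entrywise concentration inequality, the relative-to-absolute conversion via $\epsilon = \delta/\rho$, and the union bound --- is routine bookkeeping. The qualitative payoff worth emphasizing is that $m$ enters only through $\log(2MN)$, so the number of trainable SARA parameters is logarithmic in the size of $\mathbf{A}$, which is the headline claim of the theorem.
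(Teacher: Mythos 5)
Your proposal is correct and follows essentially the same route as the paper's own proof: the same choice $f=\exp$ with $\mathbf{G}_{1}=\sqrt{1-4A}\,\mathbf{G}\mathbf{W}_{Q}$, $\mathbf{G}_{2}=\sqrt{1-4A}\,\mathbf{G}\mathbf{W}_{K}$ and the row-dependent $\mathbf{v}$ realizing the FAVOR++ estimator of \cite{fplusplus}, then Theorems 3.1 and 4.2 of that work, $\epsilon=\delta/\rho$, a union bound over the $MN$ entries, and the probabilistic-method conclusion that $p_{\epsilon}<1$ for the stated $m$. You even make explicit two points the paper leaves implicit (dominating the per-entry threshold via $\mathrm{K}\geq\tau$, and the algebra solving for $m$), so nothing is missing.
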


\section{EXPERIMENTS}
\subsection{Robotic Point Cloud Transformers}
\label{sec:pct-exps}

In our first set of experiments, we trained robotic grasping Transformer policies operating on the point cloud (PC) data. Since Point Cloud Transformers (\cite{pct}) usually use relatively long 1K+ sequences, even for simple objects, the unscalability of the brute-force quadratic attention is a severe problem.
\vspace{-5mm}
\begin{figure}[h]
    \begin{center}
    \includegraphics[width=.90\linewidth]{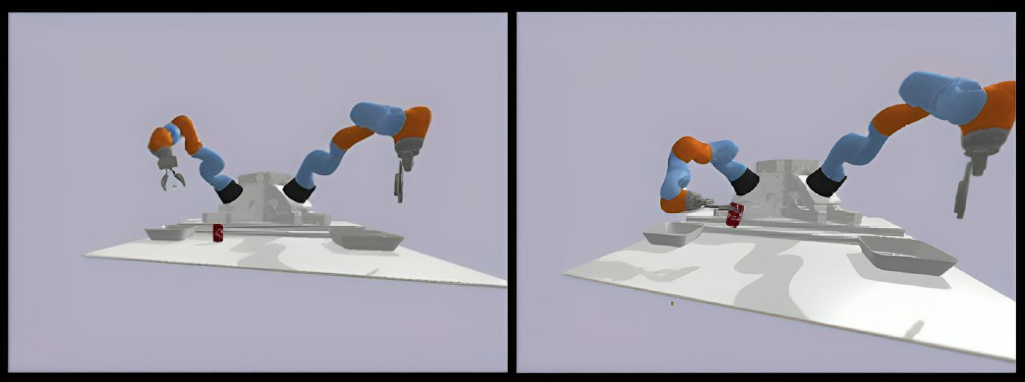}
    \caption{\small{The simulator used to train PC-input grasping policies and the successful coke can grasp with corresponding reward $r=1$.} \label{fig:sim}}
    \vspace{-4mm}
    \end{center}
\end{figure}
\vspace{-4mm}
\begin{figure}[h]
    \begin{center}
    \includegraphics[width=.99\linewidth]{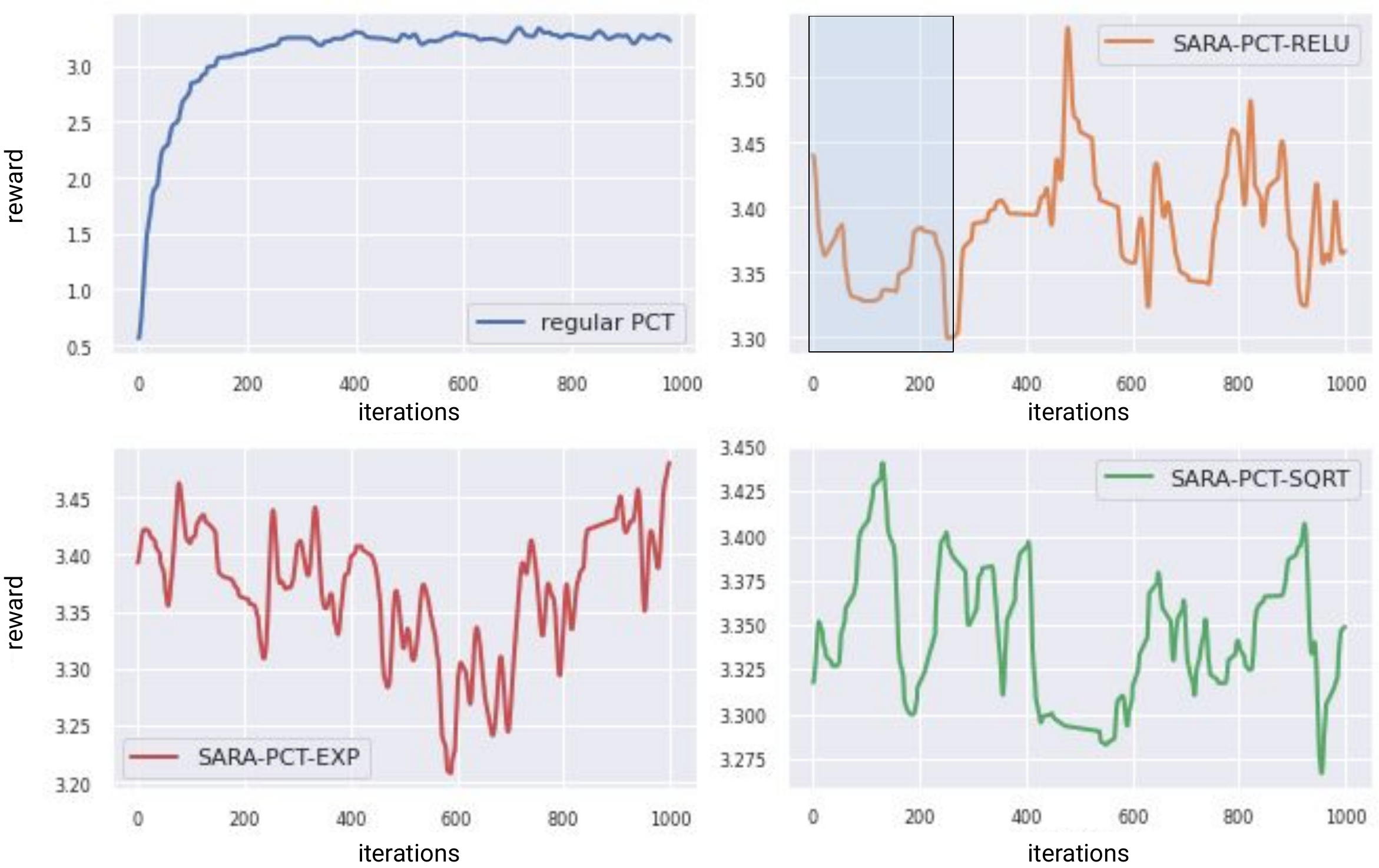} % uptraining-pct-2.pdf
    \caption{\small{Training regular PCT policy as well as three variants of SARA with $f \in \{\mathrm{ReLU}, \mathrm{exp}, \mathrm{sqrt}\}$ (up-training from the regular PCT checkpoint). The adaptation process of the linear attention for the ReLU variant is highlighted. For each of the five objects to grasp, an agent receives a binary reward (1: \textrm{grasp}, 0: \textrm{no grasp}). The reported reward is computed as their average over $100$ trials. For all the variants, all the checkpoints (even in the preliminary stage of up-training) provide good quality policies (the $y$-axis for SARA-plots is the magnified version of the short high-reward interval of the y-axis for the regular PCT training.)} \label{fig:uptraining}}
    \vspace{-5mm}
    \end{center}
\end{figure}

\subsubsection{The setting} 
A grasping policy receives a PC from a \textrm{Realsense} camera. A pass-through filter removes all points except for those of table top objects. These points are hierarchically clustered into objects. A single object PC is then passed into a PCT policy to produce a grasp pose offset from the center of the object point cloud. The observation space has 3 components: \textbf{(1)} $cloud$: $N \times 3$ point cloud with the workspace origin at the mean of the object's cloud; \textbf{(2)} $center$: $(x,y,z)$ cloud center in workspace frame of reference; \textbf{(3)} $major\_axis$: $(x, y, z)$ representing the major axis of the points in the object frame. The grasp pose action is represented by the fingertip position relative to the object center, the approach direction vector in the workspace frame, and the robot arm wrist roll angle. The grasp is then executed open-loop with a \textrm{Kuka} \textrm{IIWA} robotic arm and Weiss gripper. 

PCT policy training is conducted in the simulator (Fig. \ref{fig:sim}) via blackbox optimization (\cite{salimans}, \cite{ars}, \cite{isim2real}, \cite{tabletenniscs}). The BGS variant (\cite{isim2real}) with $l=50$ perturbation-directions, Gaussian smoothing parameter $\sigma=0.02$, step size $\eta=0.02$, $\tau=30\%$ top directions is applied. In training, an agent sees only $k=5$ different objects: a coke can, a water bottle, a chalkboard eraser, a banana and an octopus soft toy. Policies learned in sim were ready for the on-robot fine-tuning (also with blackbox methods), but it was not necessary (no major sim-to-real gap was observed). Objects encountered in testing included several of shapes never seen in training.
\begin{figure}[h]
    \begin{center}
    \includegraphics[width=.99\linewidth]{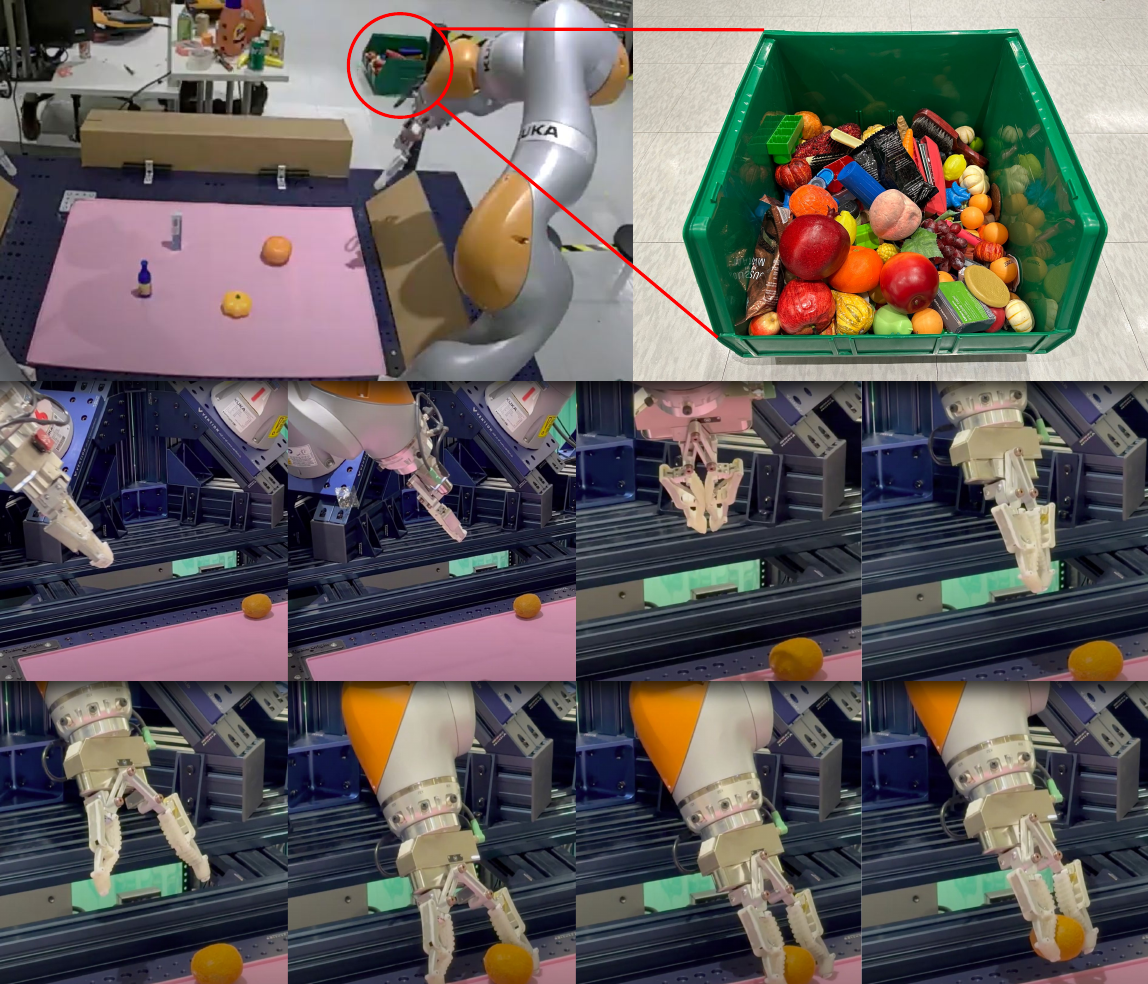}
    \caption{\small{\textbf{Upper row:} The AB-test setup. Different configurations can vary by the number of objects of the table and their shapes. \textbf{Lower rows:} One of the grasps performed by SARA-ReLU in the low-reachability-area setting (target object close to the robotic arm base), requiring careful PC-input-based control of the robotic arm.} \label{fig:ab_test}}
    \vspace{-8mm}
    \end{center}
\end{figure}

\subsubsection{SARA-PCT vs regular PCT} We trained in sim regular PCT policy till convergence for $s=1000$ iterations. In the SARA setting, we then conducted up-training with $f \in \{\exp, \mathrm{ReLU}, \mathrm{sqrt}: x \rightarrow x^{2} \}$, the all-one vector $\mathbf{v}$ and $m=d$. The results are presented in Fig. \ref{fig:uptraining}. We see almost immediate adaptation of the linear attention for all SARA variants. Thus we chose (here and for the RT-2 experiments) the simplest ReLU (that can be thought of as the tamed version of the $\mathrm{exp}$ variant), on-robot deployed it and compared with the regular PCT in the AB-test. The test consisted of $N=200$ random object configurations (see: Fig. \ref{fig:ab_test}), where for each configurations either SARA-PCT or regular PCT policy is randomly selected. The agent gets a binary reward $r \in \{0,1\}$ for each grasp (success or failure). The average reward obtained by the regular PCT agent is: $r^{\mathrm{reg}}_{\mathrm{ave}}=\mathbf{0.64}$ and by the SARA-PCT agent: $r^{\mathrm{SARA}}_{\mathrm{ave}}=\mathbf{0.75}$.

We then run speed tests for the regular PCT encoder and SARA-PCT encoder for input sequences of different lengths. 
Both encoders consist of $t=2$ Transformer layers with $16$-dimensional latent embeddings. 
The actual sequence length for the on-robot deployment varies from scene to scene, but can easily exceed 1K. As showed in Fig. \ref{fig:sara_pct}, SARA-PCT provides significant speedups, guaranteeing practically constant inference $\approx 100\mathrm{ms}$ (regardless of the point cloud size), with the attention module not being a computational bottleneck anymore. On the contrary, for the regular PCT the attention module remains a bottleneck and leads to the substantial slowdowns even for relatively small point clouds. 
\vspace{-5mm}
\begin{figure}[h]
    \begin{center}
    \includegraphics[width=.90\linewidth]{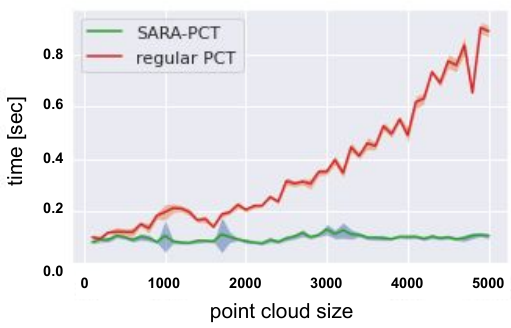} % img/PCT-speed-3.png
    \vspace{-2.0mm}
    \caption{\small{Speed tests for SARA-PCT and regular PCT. Reported are mean inference times (averaged over $l=10$ random seeds) for PCT encoders (as well as the corresponding standard deviations; see: shaded regions) as functions of the point cloud size.} \label{fig:sara_pct}}
    \vspace{-4mm}
    \end{center}
\end{figure}
\vspace{-6.5mm}
\subsection{RT-2 vision-language-action models}
\label{sec:exp-rt2}
\subsubsection{The setting} Next we consider the class of RT-2 architectures from \cite{brohan2023rt2}. Those apply PaLI-X \cite{pali-x}) vision-language-model (VLM) backbones to encode policies taking vision input and conditioned on natural language instructions. We focus on the 5B PaLI-X variant, as more practical for the on-robot deployment than the 55B variant. As described in \cite{brohan2023rt2}, the action space consists of 6-DoF positional and rotational displacement of the robot end-effector, as well as the level of extension of the robot gripper and a special discrete command for terminating the episode, which should be triggered by the policy to signal successful completion. The VLM takes a text instruction and an image (or a history of images) and produces a sequence of text tokens that can be then converted to actions (see: Sec. \ref{sec:tokenizer} for details). 

\begin{figure*}[h]
    \begin{center}
    \includegraphics[width=.99\linewidth]{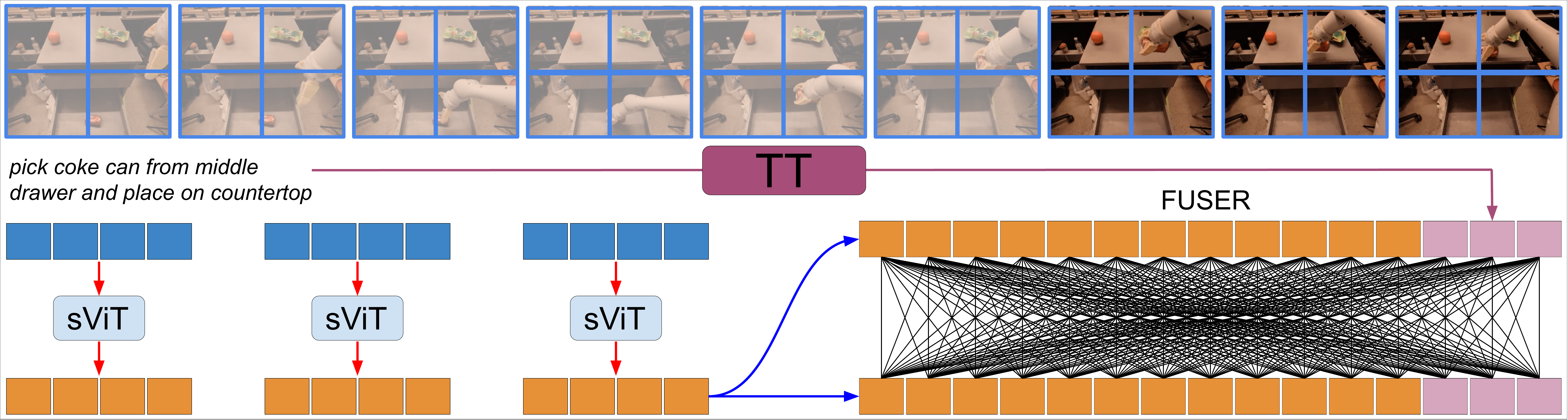}
    \caption{\small{The scheme of some of the key elements of the PaLI-X backbone of RT-2 from the computational viewpoint, accompanied with the real robot performing text instruction using SARA model. This example applies three-frame history with each frame partitioned into four patches (in practice the number of patches is much larger). Frames are encoded via SARA variants of the ViTs (\textrm{sViT}). Text instruction is separately pre-processed by the text Transformer (TT). In the fuser, all resulting embeddings are concatenated and interacting with each other via self-attention. This self-attention block is yet another good candidate for injecting SARA variants. We leave it to future work.} \label{fig:sara_pali}}
    \vspace{-5mm}
    \end{center}
\end{figure*}

\begin{table*}[h]
    \begin{center}
    \caption{\small{Comparison of different variants of the RT-2 models with the PaLI-5B VLM backbones. The comparison is conducted on the six regular manipulation task and an additional task measuring generalization level (\textrm{diverse} \textrm{pick}). The mean accuracies for all three models are: \textbf{65.8\%}, \textbf{65.1\%} and \textbf{76.4\%} (up-down). The first two models differ only by the applied attention module (regular or linear).} \label{table:clocktime}}
    \vspace{-2mm}
    \scalebox{0.95}{
     \begin{tabular}{ c  c  c  c  c  c  c  c} 
    \toprule
      & {$\textbf{\textrm{pick}}$} & {$\textbf{\textrm{knock}}$} & {$\textbf{open/close drawer}$} & {$\textbf{drawer place}$}  & {$\textbf{\textrm{upright}}$}  & {$\textbf{\textrm{move}}$} & \textbf{diverse pick} \\ [0.5ex] 
    \toprule
     RT-2 (PaLI-5B), no history + action tokens & $81\%$  & $86\%$ & $67\%$ & $39\%$ & $57\%$ & $98\%$ & $33\%$\\
     SARA-RT-2 (PaLI-5B), no history + action tokens & $83\%$ & $91\%$ & $78\%$ & $31\%$ & $46\%$ & $79\%$ & $48\%$\\
     $\textbf{SARA-RT-2 (PaLI-5B), history + vector representation}$ & $100\%$ & $91\%$ & $89\%$ & $56\%$ & $51\%$ & $81\%$ & $67\%$\\
    \bottomrule
     number of trials per eval & $36$ & $35$ & $18$ & $36$ & $35$ & $48$ & $21$\\
    \bottomrule
    \end{tabular}}
    %}
\end{center}
\vspace{-8mm}
\end{table*}
\normalsize

The pre-trained VLM is fine-tuned on the robotic data from \cite{rt1}. It consists of expert demonstrations collected with a mobile manipulation robot. Each demonstration is mapped to the natural language instruction from one of the following classes: \textit{"pick object"}, \textit{"knock object over"}, \textit{"open/close drawer"} , \textit{"place object into receptacle"}, \textit{"place object upright"}, \textit{"move object near object"}, \textit{"pick object from receptacle and place on the counter"} (details in \cite{rt1}). 

For SARA variants (with $f=\mathrm{ReLU}$ and all-one vector $\mathbf{v}$), up-training is conducted after the fine-tuning phase. There are two natural places to inject linear attention in PaLI-X models: the ViT encoder and the fuser, where tokens from all the frames as well as text ones are concatenated (self-attention modules in both) (see: Fig. \ref{fig:sara_pali}). In this work, we chose the former, leaving testing the latter to future work.

\subsubsection{Action representation} 
\label{sec:tokenizer} Each action can be represented as a vector consisting of: positional and rotational displacements, a scalar tracking gripper's extension level and a sub-vector encoding episode termination command. Traditionally (see: \cite{brohan2023rt2} for details), each continuous dimension is quantized into one of the $256$ bins, leading to an action given by a sequence of integers, tokenized by PaLI-X (the so-called \textit{action tokens}). We propose an alternative  (\textit{vector representation} of the action space), where each continuous dimension is rounded up to four decimal places, leading to the string that can be text-tokenized as usually. We noticed that the vector representation of actions results in higher quality models.
\begin{figure}[h]
    \begin{center}
    \vspace{-3mm}
    \includegraphics[width=.90\linewidth]{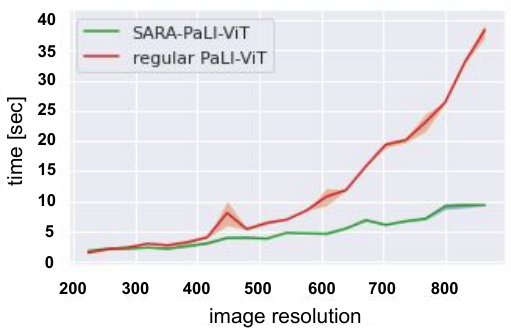} % 
    \caption{\small{Speed tests (on a CPU). Reported numbers are as in Fig. \ref{fig:sara_pali}, but for PaLI-ViT encoders as functions of the resolution of the image ($x_{\mathrm{value}} \times x_{\mathrm{value}}$) for the default $16 \times 16$ patch size.} \label{fig:speed-PaLI}}
    \vspace{-4mm}
    \end{center}
\end{figure}
\subsubsection{SARA-RT-2 vs regular RT-2} We start by comparing RT-2 variant using one frame (no history) and old tokenizer with its SARA counterpart. Both use $L=196$ vision tokens. 
As we show in Table \ref{table:clocktime} (first two rows), SARA variant maintains good performance across all seven tested tasks (providing better performance on four of them). Both variants have very similar mean accuracy, though SARA variant generalizes better. Besides, regular RT-2 controller needs $\textbf{53.2}$ ms (TPU) for the forward pass, while SARA's: $\textbf{45.7}$ ms (TPU) ($\textbf{14\%}$ speedup). Finally, we combine SARA with the new tokenizer from \ref{sec:tokenizer} and the history of $H=3$ frames. The latter two techniques provide farther accuracy boost, but are  more demanding computationally, and thus challenging for direct on-robot deployment. It turns out that the resulting ViT-linear-attention hybrid RT-2 variant (third row in Table \ref{table:clocktime}) provides $\textbf{12\%+}$ mean accuracy improvement, excelling in certain tasks (e.g. $\textbf{100\%}$ accuracy on the $\textbf{36}$ \textrm{pick} tasks). 

We run additional speed tests, comparing SARA-based ViT of the RT-2 model's PaLI backbone with its regular counterpart for images of different resolutions (on CPUs). The results are presented in Fig. \ref{fig:speed-PaLI}. The ViT encoder of PaLI is computationally bottlenecked by its attention module. SARA remains a feasible approach even for high resolution images, while the regular variant does not. We plan to exercise this feature of SARA by using much higher resolution images (a challenge for regular RT-2 models) in future work.

\vspace{-1.0mm}
\section{CONCLUSION}
\vspace{-1.0mm}
We introduced SARA-RT, a new paradigm for adapting Transformer-based robotic controllers to their more practical linear-attention deployable counterparts. We have provided its comprehensive evaluation on real robots, including improving the inference of the VLA class of RT-2 models and controllers applying Point Cloud Transformers. We consider it one of the first steps towards addressing practical challenges of using Transformer-based robotic controllers.

%%%%%%%%%%%%%%%%%%%%%%%%%%%%%%%%%%%%%%%%%%%%%%%%%%%%%%%%%%%%%%%%%%%%%%%%%%%%%%%%

\bibliographystyle{IEEEtran} 
\bibliography{IEEEfull,main}

%%%%%%%%%%%%%%%%%%%%%%%%%%%%%%%%%%%%%%%%%%%%%%%%%%%%%%%%%%%%%%%%%%%%%%%%%%%%%%%%

% \input{example}
\end{document}